\newtheorem{theorem}{Theorem}
\newtheorem{definition}[theorem]{Definition}
\newtheorem{lemma}[theorem]{Lemma}
\newtheorem{proposition}[theorem]{Proposition}
\newcommand{\Real}{\mathbb R}
\title{\LARGE \bf
Exponentially Fast Parameter Estimation in Networks\\
Using Distributed Dual Averaging$^\dagger$}
\author{Shahin Shahrampour$^\ddagger$ and Ali Jadbabaie$^\ddagger$
\thanks{$^\dagger$This work was supported by AFOSR MURI CHASE, and ONR BRC Program  on Decentralized, Online Optimization.}
\thanks{$^\ddagger$The authors are with the Department of Electrical and Systems Engineering
and General Robotics, Automation, Sensing and Perception (GRASP)
Laboratory, University of Pennsylvania, Philadelphia, PA 19104-6228 USA.
        {\tt\small \{shahin,jadbabai\}@seas.upenn.edu}}%
}
\begin{document}

\maketitle
\thispagestyle{empty}
\pagestyle{empty}

\begin{abstract}

In this paper we present an optimization-based view of distributed parameter
estimation and observational social learning in networks. Agents receive a
sequence of random, independent and identically distributed (i.i.d.) signals, each of which individually 
may not be informative about the underlying true state, but the
signals together are globally informative enough to make the true state identifiable. Using an
optimization-based characterization of Bayesian learning as proximal
stochastic gradient descent (with Kullback-Leibler divergence from a prior as a
proximal function), we show how to efficiently use  a distributed, online
variant of Nesterov's dual averaging method to solve the estimation with purely local information. 
When the true state is globally identifiable, and the network is connected, we prove that
agents eventually learn the true parameter using a randomized gossip scheme. We demonstrate that with high probability the convergence is exponentially fast with a rate dependent on the KL divergence of observations under the true state from observations under the second likeliest state. Furthermore, our work also highlights the possibility of learning under continuous adaptation of network which is a consequence of employing constant, unit stepsize for the algorithm.
\end{abstract}

\section{Introduction}

Distributed estimation, detection, and observational social learning  has been an intense focus of research over the past 3 decades~\cite{BorVar82,Tsit88,tsitsiklis1993decentralized, mossel2010efficient,KhanJad2010, kar2012distributed,rad2010distributed, jadbabaie2012non,dekel2012optimal},
with applications ranging from sensor networks to social and economic networks. In these scenarios,  agents in a network need  to learn the value of a parameter, that might represent a state or decision (often called the state of the world),  but each individual agent lacks the necessary information to estimate the state on its own. Instead, the global spread of information in the network provides agents with adequate data for recovering the true state and as a result, agents iteratively exchange information with their neighbors. In distributed sensor and robotic networks, agents use local diffusion to augment their imperfect observations with information from their neighbors~\cite{tsitsiklis1984problems,jadbabaie2003coordination,kar2012distributed,EgerMesBook, bullo2009distributed,Olfati05}.

On the other hand, recent developments in distributed optimization have led to many advances and interesting decentralized algorithms, generalizing these results and at the same time opening new venues for development of principled  distributed estimation algorithms.
Examples of such papers  include the works of researchers such as Nedi\'c and Ozdaglar~\cite{nedic2009distributed},  Lobel and Ozdaglar~\cite{lobel2008distributed}, Ram, Nedi\'c and Veeravalli~\cite{ram2010distributed} and Nedi\'c, Olshevsky, Ozdaglar and Tsitsiklis~\cite{nedic20092distributed}, Lopes and Sayed~\cite{lopes2007incremental},
and more recently the results of  Duchi, Agarwal and Wainwright~\cite{duchi2010dual}. Of particular importance to the work in this paper is the work of Duchi {\it et al.} in~\cite{duchi2010dual} where the authors  develop a distributed method based on dual averaging of subgradients. Using proper diminishing stepsize rule, their algorithm converges to the optimal solution in deterministic network change as well as stochastic.

The goal of this paper is to provide an optimization-based formulation of parameter estimation and social learning and develop a link between the two. Our motivation for the current study is the recent results of~\cite{rad2010distributed} and~\cite{jadbabaie2012non} in which the authors develop non-Bayesian learning schemes to circumvent the complexities associated with fully Bayesian estimation~\cite{BorVar82}, as well as the results of~\cite{duchi2010dual} on distributed optimization. The proposed algorithms in~\cite{rad2010distributed} and~\cite{jadbabaie2012non}, involve agents that repeatedly receive heterogeneous, private, random i.i.d. signals generated from a global likelihood function and the goal of agents is to learn the true state of the world using local marginals. Both papers  show that under mild assumptions all agents eventually estimate the true parameter correctly. In~\cite{jadbabaie2012non}, agents update their prior beliefs using private observations and then compute a weighted average of their beliefs with that of their neighbors, while in~\cite{rad2010distributed}, agents update the logarithms of their beliefs using the local log-likelihood function. In both cases, under mild assumptions agents eventually learn the true state.  We show that the results of~\cite{rad2010distributed}, have a very interesting optimization-based interpretation.
Exploring this connection and building an optimization-based rationale helps us quantify the pros and cons of different approaches to the problem.

 A key unifying observation that links both recursive Bayesian learning and  Maximum Likelihood Estimation(MLE) problems to online optimization (even  in the centralized setting) is the view of MLE in  Bayesian framework as an optimization in which the inner product of the belief vector and the global log-likelihood function (represented as a vector) is maximized, subject to the belief vector being a probability distribution over the space of parameters. Perhaps less well-known, is the fact that Bayesian parameter estimation can be derived from the exact same setup if the Kullback-Leibler divergence from a prior belief is added to the optimization cost or used as a {\it proximal function}. We show an efficient distributed counterpart of this idea using a stochastic variant of Nesterov's projected dual averaging~\cite{nesterov2009primal}. Aggregating their private log-likelihood functions, agents average their local information, and in the same time step update estimates of the centralized beliefs in a step akin to applying Bayes rule on the aggregated log-likelihoods. When the true state is globally identifiable and the network is connected, we show that agents reach consensus on the beliefs in  probability. More specifically, we prove that with high probability the convergence is exponentially fast with a rate dependent on the {\it average expected discrimination information} for the true state over the second likeliest state captured by the KL divergence of the observations under two aforementioned states. We further show that indeed there is no need for a diminishing stepsize rule as in general subgradient approaches, and a fixed stepsize of 1 can be used. Interestingly, the method recovers the distributed MAP algorithm proposed by~\cite{rad2010distributed} as a special case.

The rest of the paper is organized as follows. In the next section, we introduce the model under which agents interact, define our learning problem and formulate it as a constrained maximization. In section III, we recover Bayesian estimation with dual averaging. In section IV we show applying gossip distributed dual averaging under constant, unit stepsize rules results in learning in the probability sense, and the convergence is exponentially fast. Section V concludes.

\section{Preliminaries}

\subsection{Agents and Observation}

We consider a network consisting of a  finite number of agents $V=\{1,2,\ldots,n\}$. The agents indexed by  $i\in V$ seek a fixed, unique, true state of the world  $\theta^\ast\in \Theta$ with $\Theta=\{\theta_1,\theta_2,\ldots,\theta_m\}$ denoting a finite set of possible states. At each time $t\geq 0$, belief of agent $i$ is denoted by  $\mu_{i,t}(\theta) \in  \Delta \Theta$, where $\Delta \Theta$ is a probability distribution over the set $\Theta$.   In particular, $\mu_{i,0}(\theta) \in \Delta \Theta$ denotes the prior belief of agent $i$ about the states of the world. For each agent $i$, we assume the prior $\mu_{i,0}$ is in the interior of the probability simplex and as a result has no zero elements\footnote{We will see that this assumption is just for dealing with log-likelihood functions and technical issues; otherwise, we only need strict positivity of beliefs over the true state.}.

The learning model is given by a conditional likelihood function $\ell(s^t|\theta_j)$ which is governed by a state of the world $\theta_j \in \Theta$. The signal $s_t=(s_1^t,s_2^t,\ldots,s_n^t)\in S_1\times \dots \times S_n$ is generated at each time $t$, and  $s_i^t\in S_i$ denotes the signal privately observed by agent $i$ at time $t$, where $S_i$ is the signal space for agent $i$. $\ell_i(.|\theta_j)$ represents the $i$-th marginal of $\ell(.|\theta_j)$, and we let the vector $\ell_i(.|\theta)=[\ell_i(.|\theta_1),...,\ell_i(.|\theta_m)]^T$, for any $i\in V$, where $\ell_i(.|\theta_j)>0$ for all signals at all times. 
Agent $i$ at time $t$, has access to the parametrized likelihood of the realized private signal $s_i^t$, i.e., it knows the value of $\ell_i(s_i^t|\theta)$, but does not have access to the likelihood functions of other agents, i.e., it does not know $\ell_j(.|\theta)$ for any $j\neq i$.  Generated signals are i.i.d. over time and also independent over agents. We also define $\bar{\Theta}_i$ as the set of states that are observationally equivalent to $\theta^\ast$ for agent $i$ ; in other words, $\bar{\Theta}_i=\{\theta_j \in \Theta : \ell_i(s_i|\theta_j)= \ell_i(s_i|\theta^\ast) \ \ \forall s_i\in S_i\} $ with probability one. Let $\bar{\Theta}=\cap_{i=1}^n\bar{\Theta}_i$ be the set of states that are observationally equivalent to $\theta^\ast$ from all agents perspective. We assume
\begin{description}
\item[A1.]The true state is globally identifiable, and hence, $\bar{\Theta}=\{\theta^\ast\}$.  
\item[A2.]Each log-marginal $\log \ell_i(.|\theta_j)$ has a bounded variance. 
\end{description}

The probability triple $(\Omega,\mathcal{F},\mathbb{P}^{\theta^\ast})$ is defined such that $\Omega=(\otimes_{i=1}^nS_i)^\mathbb{N}$, $\mathcal{F}_t$ is the smallest $\sigma$-field containing the information about all agents up to time $t$, and $\mathbb{P}^{\theta^\ast}$ is the true probability measure with respect to $\Omega$ with $\mathbb{E}^\ast$ being its corresponding expectation operator. $\mathbb{N}$ represents the natural numbers and $\displaystyle{\mathcal{F}=\cup_{t=1}^{\infty}\mathcal{F}_t}$. 

\begin{definition}
Agent $i\in V$ asymptotically learns the true parameter $\theta^\ast$ on a path $\{s^t\}_{t=1}^{\infty}$ if, along that path,
\begin{align*}
&\mu_{i,t}(\theta^\ast) \rightarrow 1 \ \ \ \ \text{as}   \ \ \ \  t\rightarrow \infty.
\end{align*}
\end{definition}

The definition is intuitive as learning occurs when agents assign probability one to the unique  true parameter.

\subsection{Time Model and Communication Structure}
 The interaction between agents is captured  by an undirected  graph $G=(V,E)$, where $V$ is the set of agents and if there is a link between agent $i$ and agent $j$, the pair $\{i,j\}$ belongs to the set $E$.  We let $\mathcal{N}_i=\left\{j\in V: \{i,j\}\in E\right\}$ be the set of neighbors of agent $i$.
 
Agents communication conforms to an invariant gossip algorithm\cite{boyd2006randomized}, wherein each node has a clock which ticks according to a rate 1 Poisson process. Equivalently, there is a single global clock which ticks according to a rate $n$ Poisson process at times $T_t$, where $\{T_t-T_{t-1}\}$ are i.i.d. exponential random variables with rate $n$. In the analysis, we use the index $t$ to refer to the $t$-th time slot $[T_{t-1},T_t)$, $t\geq 0$. At each tick $T_t$ of the global clock, agent $I_t\in V$ is picked uniformly at random. Then, it contacts a neighbor $J_t\in V$ with probability $P_{I_tJ_t}$, and they update their belief. Denoting the communication matrix by $W(t)$, this amounts to $W(t)$ taking the form
\begin{align*}
W(t)=I-\frac{(\mathbf{e}_{I_t}-\mathbf{e}_{J_t})(\mathbf{e}_{I_t}-\mathbf{e}_{J_t})^T}{2},
\end{align*}
with probability $\frac{1}{n}P_{I_tJ_t}$, where $\mathbf{e}_i$ is the $i$-th unit vector in the standard basis of $\Real^n$. Hence, the matrix $P=[P_{ij}]$ has nonnegative entries and $P_{ij}>0$ only if $\{i,j\}\in E$. By definition, $P$ is row stochastic with largest eigenvalue 1. We assume
\begin{description}
\item[A3.]The network is \textit{connected} i.e., there exists a path from any agent $i$ to any agent $j$, and the second largest eigenvalue of $\mathbb{E}[W(t)]$ is strictly less than one in magnitude. 
 \end{description}
 
The connectivity constraint in assumption (A3) guarantees the information flow in the network. The assumption, for instance, holds if the underlying structure of the network is connected and nonbipartite.
\subsection{Problem Setup and Formulation}
The MLE problem of  finding  the likeliest true state, can  be formulated  in terms of a belief vector $\mu$ as the following optimization
\begin{align}
&\max_{\mu \in \Delta \Theta} \bigg\{ f(\mu)\triangleq \mu^T\sum_{i=1}^n\mathbb{E}^\ast_{s_i}[\log \ell_i(s_i|\theta)] \bigg\} \label{masale3},
\end{align}
with $\mu^\ast$ being its optimal solution. In the next section we discuss that a regularization term can be added to the objective function of \eqref{masale3} or used as a common {\it proximal function} among the agents. Alternatively, one might cast \eqref{masale3} as a quest for finding the MLE solution
\begin{align}
\theta^\ast=\text{argmax}_{\theta_j \in \Theta} \bigg\{\mathbb{E}^\ast_s[\log\ell(s|\theta_j)] \bigg\} \label{masale2}.
\end{align}
The equivalence of \eqref{masale3} and \eqref{masale2} follows immediately from the independence of agents observations, and the global identifiability of $\theta^\ast$ (assumption A1) which guarantees that \eqref{masale3} has a unique maximizer. In the sequel, without loss of generality, we assume the components of the vector $\mathbb{E}^\ast_s\log\ell(s|\theta)$ are in descending order, i.e.
 \begin{align}
\mathbb{E}^\ast_{s}[\log \ell(s|\theta_1)]>\mathbb{E}^\ast_{s}[\log \ell(s|\theta_2)]\geq ... \geq \mathbb{E}^\ast_{s}[\log \ell(s|\theta_m)], \label{orderassum}
\end{align}
where the strict inequality on the left-hand side of \eqref{orderassum} is due to uniqueness of $\theta^\ast=\theta_1$.
Hence, $\theta_1$ is the unique true state that is aimed to be recovered, and $\mu^\ast=\mathbf{e}_1$.

\section{Bayesian Estimation via Nesterov's Dual Averaging}

The learning problem formulated in \eqref{masale3} is a maximization over a closed convex set, so the structure of the problem allows us to apply a distributed generalization of the centralized dual averaging method proposed in~\cite{nesterov2009primal}. First, however, we show how Bayesian learning can be viewed with an optimization lens.

A common approach to tackle problem \eqref{masale3} is to consider the empirical average as the cost function, and solve the online stochastic learning problem. To this end, we employ a regularized dual averaging scheme generating a sequence of iterates $\{\mu_t , z_t\}^{\infty}_{t=0}$, where $\mu_t \in \Delta \Theta$ and $z_t \in \mathbb{R}^m$. At time period $t$ the algorithm receives $g_t$, the stochastic gradient of the objective function, and performs the following set of centralized updates:
\begin{align}
&z_{t+1}=z_t+g_t \ \ \ \ \text{and} \ \ \ \ \mu_{t+1}=\prod_{\Delta \Theta}^{\psi}(z_{t+1},\alpha_t)  \label{standard},
\end{align}
where $\{\alpha_t\}^{\infty}_{t=0}$ is a non-increasing sequence of positive stepsize, $\psi(.)$ is a so called {\it proximal function}, and
\begin{align}
\prod_{\Delta \Theta}^{\psi}(z,\alpha) \triangleq \text{argmin}_{x\in \Delta \Theta}\bigg\{-<z,x>+\frac{1}{\alpha}\psi(x)\bigg\} \label{mini},
\end{align}
with $<z,x>$ being the standard inner product in the space of $\mathbb{R}^m$.

The {\it dual} update $z$, essentially integrates the stochastic gradients, and the second update projects the integration on the feasible set while regularizing the projection using a proximal function.

A particularly relevant example of a proximal function is the Kullback-Leibler (KL) divergence (also known as relative entropy) from an initial belief $\mu_0$ defined as\cite{cover2012elements}:
\begin{align}
\psi(x)=D_{KL}(x||\mu_0)\triangleq \sum_{i=1}^m [x]_i\log \frac{[x]_i}{{\mu_0(\theta_i)}}, \label{KLfunction}
\end{align}
for any $x \in \Delta \Theta$, where $[x]_i$ is the $i$-th component of the vector $x$. It is straightforward to verify that KL divergence from $\mu_0$ is strongly convex with respect to the $\ell_1$-norm on the probability simplex $\{x|x\geq 0 , \sum_{i=1}^m [x]_i=1\}$\footnote{At origin we consider the limit; in other words, we define $0\log(0)=0$.}.
 
The following proposition shows how the set of updates \eqref{standard} equipped with the KL divergence could be viewed as an optimization counterpart of Bayesian rule.

\begin{proposition}\label{centralized}
 Given update rules \eqref{standard} with stepsize sequence $\{\alpha_t=1\}^{\infty}_{t=0}$, using KL divergence as the proximal function, following the stochastic gradient at each time period $t$, and letting $z_0=0$, we obtain the Bayes rule as
\begin{align}
\mu_t(\theta)=\frac{\mu_{t-1}(\theta)\odot \ell(s^t|\theta)}{\sum_{j=1}^m\mu_{t-1}(\theta_j) \ \ell(s^t|\theta_j)} \label{bayesian},
\end{align}
where $\odot$ is component-wise multiplication. 
\end{proposition}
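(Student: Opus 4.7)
The strategy is direct computation: unroll the dual update, solve the projection \eqref{mini} in closed form via a Lagrangian argument, and verify that the resulting expression satisfies the recursion \eqref{bayesian}.

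First I would identify the stochastic gradient. Since by independence of agents' observations
\[
f(\mu)=\mu^{T}\sum_{i=1}^{n}\mathbb{E}^{\ast}_{s_i}[\log\ell_i(s_i|\theta)]=\mu^{T}\mathbb{E}^{\ast}_{s}[\log\ell(s|\theta)],
\]
a natural unbiased stochastic (super)gradient at time $t$ given the realized signal $s^{t}$ is the vector $g_{t}=\log\ell(s^{t}|\theta)\in\mathbb{R}^{m}$ with $j$-th entry $\log\ell(s^{t}|\theta_{j})$. Then, with $z_{0}=0$ and $\alpha_t=1$, an easy induction gives
\[
z_{t+1}=\sum_{\tau=1}^{t+1}\log\ell(s^{\tau}|\theta),
\]
so that $\exp([z_{t+1}]_{j})=\prod_{\tau=1}^{t+1}\ell(s^{\tau}|\theta_{j})$.

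Next, I would solve the projection step in closed form. With $\psi(x)=D_{KL}(x\|\mu_{0})$ and $\alpha_t=1$, the update \eqref{mini} becomes
\[
\mu_{t+1}=\arg\min_{x\in\Delta\Theta}\Bigl\{-\langle z_{t+1},x\rangle+\sum_{j=1}^{m}[x]_{j}\log\frac{[x]_{j}}{\mu_{0}(\theta_{j})}\Bigr\}.
\]
Forming the Lagrangian with multiplier $\lambda$ for the simplex constraint $\sum_{j}[x]_{j}=1$ and setting the partial derivative with respect to $[x]_{j}$ to zero yields $\log\frac{[x]_{j}}{\mu_{0}(\theta_{j})}=[z_{t+1}]_{j}-1-\lambda$; the strict convexity of $\psi$ on the interior of the simplex and the fact that $\mu_{0}$ has no zero entries guarantee this stationary point is the unique minimizer and is interior. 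Normalizing to the simplex gives the Gibbs form
\[
\mu_{t+1}(\theta_{j})=\frac{\mu_{0}(\theta_{j})\,\prod_{\tau=1}^{t+1}\ell(s^{\tau}|\theta_{j})}{\sum_{k=1}^{m}\mu_{0}(\theta_{k})\,\prod_{\tau=1}^{t+1}\ell(s^{\tau}|\theta_{k})}.
\]

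Finally, I would identify this closed form with the recursion \eqref{bayesian}. Both the numerator and denominator of the ratio above factor the last term $\ell(s^{t+1}|\cdot)$, and what remains is exactly $\mu_{t}(\cdot)$ by the same formula at time $t$; equivalently, one checks by induction on $t$ that the displayed expression satisfies \eqref{bayesian} with initial condition $\mu_{0}$. The main technical point to handle carefully is the optimality argument for the constrained minimization: verifying uniqueness (via strict convexity of KL on the relative interior) and confirming that no nonnegativity multipliers are active (so that the analytic Lagrangian stationary point is indeed the projection). Beyond that the proof is mechanical; there are no probabilistic or convergence subtleties at this stage, since the claim is purely algebraic identification of one iteration of \eqref{standard} with recursive Bayesian updating.
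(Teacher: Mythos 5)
Your proposal is correct and follows essentially the same route as the paper's own proof: identify the stochastic gradient $g_t=\sum_{i=1}^n\log\ell_i(s_i^t|\theta)=\log\ell(s^t|\theta)$, unroll the dual update to $z_t$, solve the projection \eqref{mini} in closed form via the Lagrangian with the simplex multiplier, normalize to obtain the Gibbs form, and recover \eqref{bayesian} by factoring out the last likelihood term (what the paper calls ``compositionality''). Your added care about uniqueness of the stationary point and the inactivity of the nonnegativity constraints is a welcome tightening of a step the paper leaves implicit, but it does not change the argument.
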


\begin{proof}
To solve \eqref{masale3} with updates \eqref{standard}, since the stochastic gradient is $g_t=\sum_{i=1}^n\log \ell_i(s_i^t|\theta)$, performing the first update, we have
\begin{align*}
z_t=\sum_{i=1}^n\sum_{\tau=0}^{t-1}\log \ell_i(s_i^\tau|\theta)=\sum_{\tau=0}^{t-1}\log \ell(s^\tau|\theta).
\end{align*}
Using $\psi(x)=\sum_{j=1}^m [x]_j \log \frac{[x]_j}{\mu_0(\theta_j)}$ as the proximal function, we need to solve
\begin{align}
\mu_t=\text{argmin}_{x \in \Delta \Theta} \bigg\{- x^T z_t+\sum_{j=1}^m [x]_j \log \frac{[x]_j}{\mu_0(\theta_j)}\bigg\} \label{lagranjnashode}.
\end{align}
Leaving the positivity constraint implicit, we can write \eqref{lagranjnashode} as the maximization of the following Lagrangian {\small
\begin{align}
\mathbb{L}(x,\lambda)= x^T \sum_{\tau=0}^{t-1}\log \ell(s^\tau|\theta)-\sum_{j=1}^m [x]_j \log \frac{[x]_j}{\mu_0(\theta_j)}+\lambda(x^T \mathbf{1}-1) \label{lagranjshode},
\end{align}}
where $\mathbf{1}$ is the vector of all ones. Differentiating \eqref{lagranjshode} we get
{\small
\begin{align*}
&\frac{\partial}{\partial_{[x]_j}}\mathbb{L}(x,\lambda)= \sum_{\tau=0}^{t-1}\log \ell(s^\tau|\theta_j)-\log [x]_j+\log \mu_0(\theta_j)-1+\lambda \\
&\frac{\partial}{\partial_\lambda}\mathbb{L}(x,\lambda)=x^T\mathbf{1}-1.
\end{align*}}
Setting the above equations to zero, we get
\begin{align}
&x=\exp^{\lambda-1}\mu_0\odot \prod_{\tau=0}^{t-1} \ell(s^\tau|\theta) \label{eq9} \\
&x^T\mathbf{1}=1 \label{eq10},
\end{align}
and replacing $x$ in \eqref{eq10} by \eqref{eq9} we have
\begin{align}
\exp^{\lambda-1}=\frac{1}{\sum_{j=1}^m \mu_0(\theta_j)\prod_{\tau=0}^{t-1} \ell(s^\tau|\theta_j) }.\label{eq11}
\end{align}
Hence, by \eqref{eq9} and \eqref{eq11} we have
\begin{align*}
\mu_t(\theta)=\frac{\mu_0(\theta)\odot \prod_{\tau=0}^{t-1}\ell(s^\tau|\theta)}{\sum_{j=1}^m\mu_{0}(\theta_j)\prod_{\tau=0}^{t-1} \ell(s^\tau|\theta_j)},
\end{align*}
and \eqref{bayesian} follows by compositionality of the above equation.
\end{proof}

We derived a closed-form solution for $\mu_t(\theta)$ that essentially performs the Bayesian update; each agent aggregates information up to time $t$, and then, infers the posterior from prior. One can prove the almost sure convergence of $\mu_t(\theta)$ combining the arguments in\cite{blackwell1962merging} and\cite{jadbabaie2012non}. However, we are interested in solving \eqref{masale3}
in a decentralized manner, and we only use a generalized result of Proposition \ref{centralized} later.

\section{Distributed Stochastic Learning}
We now show that the centralized optimization studied in the previous section can be distributed over the network. Contrary to the centralized algorithm, each agent $i\in V$, at $t$-th slot only observes ${g}_{i,t}$, the stochastic gradient  of its associated log-likelihood function, while it does not have access to the signals of other agents. The communication structure is based on a randomized gossip scheme. Let the global Poisson clock at the beginning of the $t$-th slot tick for agent $i$ (with probability $\frac{1}{n}$), and let agent $i$ contact a neighboring node $j$ (with probability $P_{ij}$). Then, agents $i$ and $j$ average their accumulated observations from previous slots, and add their new stochastic gradients to form the following online updates
 {\normalsize
\begin{align}
&z_{i,t+1}=\frac{z_{i,t}+z_{j,t}}{2}+{g}_{i,t} \ \ \text{and} \ \ z_{j,t+1}=\frac{z_{i,t}+z_{j,t}}{2}+{g}_{j,t} , \label{z1}
\end{align}}
while in that slot, any other agent $k\not \in \{i,j\}$ does not contact its neighbors, and only follows its own stochastic gradient ${g}_{k,t}$, so we have
\begin{align}
z_{k,t+1}=z_{k,t}+{g}_{k,t}. \label{z2}
\end{align}
Having updated their observations, all agents calculate their estimates
\begin{align}
\mu_{i,t+1}(\theta)=\prod_{\Delta \Theta}^{\psi}(z_{i,t+1},\alpha_t) \label{distributed},
\end{align}
where $\prod_{\Delta \Theta}^{\psi}(z,\alpha)$ is previously defined in \eqref{mini}. Letting
\[Z_t= \left( \begin{array}{cccc}
z_{1,t} \\
z_{2,t} \\
\vdots \\
z_{n,t}  \end{array} \right)
\ \ \ \text{and} \ \ \ G_t= \left( \begin{array}{cccc}
{g}_{1,t} \\
{g}_{2,t} \\
\vdots \\
{g}_{n,t}  \end{array} \right),\]
the set of updates \eqref{z1} and \eqref{z2} can be represented in the matrix form as follows
\begin{align}
Z_{t+1}=\tilde{W}(t)Z_t+G_t, \label{matrixform}
\end{align}
where $\tilde{W}(t)=W(t)\otimes \mathbf{I}_{m\times m}$, and the random matrix $W(t)$ with probability $\frac{1}{n}P_{ij}$ takes the form
\begin{align}
W(t)=I-\frac{(\mathbf{e}_i-\mathbf{e}_j)(\mathbf{e}_i-\mathbf{e}_j)^T}{2}. \label{matrixW}
\end{align}
We use the above distributed stochastic scheme to optimize \eqref{masale3} equipped with the KL divergence defined in \eqref{KLfunction}. It is noteworthy that in the distributed setting, employing the KL divergence from the initial belief, each agent exhibits inertia to a default opinion over the states. We prove in the next lemma that $\mu_{i,t}(\theta)$ preserves a Bayes-like evolution.
\begin{lemma}\label{mudistr}
Given the set of update rules \eqref{z1}-\eqref{z2}-\eqref{distributed} with stepsize sequence $\{\alpha_t=1\}^{\infty}_{t=0}$, following its stochastic gradient ${g}_{i,t}=\log \ell_i(s_i^t|\theta)$ at $t$-th time period, if we let $z_{i,0}=0$\footnote{Regardless of the zero initial value, all the results hold asymptotically, and this condition only simplifies our derivation.}, agent $i$'s estimator evolves as
\begin{align}
\mu_{i,t}(\theta)=\frac{\mu_{i,0}(\theta)\odot \exp[t \Phi_{i,t}(\theta)]}{\sum_{j=1}^m \mu_{i,0}(\theta_j)\exp [t\Phi_{i,t}(\theta_j)]}  \label{javab},
\end{align}
where  
{\small
\begin{align}
\Phi_{i,t}(\theta)=\frac{1}{t}\sum_{\tau=0}^{t-1}\sum_{k=1}^n  \bigg[\prod_{\rho=1}^{t-1-\tau}W(t-\rho)\bigg]_{ik} \log \ell_k (s_k^\tau|\theta). \label{alireza}
\end{align}}
\end{lemma}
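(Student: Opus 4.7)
The plan is to unroll the matrix recursion \eqref{matrixform} from the zero initial condition, identify the resulting expression as $t\,\Phi_{i,t}(\theta)$, and then invoke the KL–projection computation from the proof of Proposition \ref{centralized} to convert the dual variable into the claimed primal form.

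First, I would iterate \eqref{matrixform}. Since $Z_0 = 0$, a straightforward induction on $t$ yields
\begin{equation*}
Z_t \;=\; \sum_{\tau=0}^{t-1}\Bigl(\prod_{\rho=1}^{t-1-\tau}\tilde{W}(t-\rho)\Bigr)G_\tau,
\end{equation*}
with the convention that the empty product (when $\tau = t-1$) equals the identity. Because $\tilde{W}(t) = W(t)\otimes \mathbf{I}_{m\times m}$, the Kronecker structure lets me read off the $i$-th block as
\begin{equation*}
z_{i,t}(\theta) \;=\; \sum_{\tau=0}^{t-1}\sum_{k=1}^{n}\Bigl[\prod_{\rho=1}^{t-1-\tau}W(t-\rho)\Bigr]_{ik}\,g_{k,\tau}(\theta),
\end{equation*}
and substituting $g_{k,\tau}(\theta) = \log \ell_k(s_k^\tau|\theta)$ gives exactly $z_{i,t}(\theta) = t\,\Phi_{i,t}(\theta)$ by the definition \eqref{alireza}.

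Next, I would apply the projection step \eqref{distributed} with $\alpha_t = 1$ and $\psi(\cdot) = D_{KL}(\cdot\,\|\,\mu_{i,0})$. This is exactly the minimization \eqref{lagranjnashode} of the proof of Proposition \ref{centralized}, but with the centralized dual accumulator replaced by $z_{i,t}$ and the common prior $\mu_0$ replaced by $\mu_{i,0}$. Repeating that Lagrangian calculation verbatim (differentiate with respect to $[x]_j$ and the multiplier $\lambda$, then normalize) produces
\begin{equation*}
\mu_{i,t}(\theta) \;=\; \frac{\mu_{i,0}(\theta)\odot \exp\bigl(z_{i,t}(\theta)\bigr)}{\sum_{j=1}^m \mu_{i,0}(\theta_j)\exp\bigl(z_{i,t}(\theta_j)\bigr)}.
\end{equation*}
Substituting $z_{i,t}(\theta) = t\,\Phi_{i,t}(\theta)$ yields \eqref{javab}.

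The only mildly delicate point is bookkeeping: making sure the index $t-\rho$ in the matrix product is aligned with the slot in which each gossip exchange actually occurred, and verifying that the empty-product convention correctly accounts for the freshly added gradient $g_{i,t-1}$ at the final step. Once this indexing is pinned down, the proof is essentially a two-line unrolling followed by reusing the projection calculation already carried out in Proposition \ref{centralized}; no new analytic ideas are required, which is why I would not expect any single step to pose a serious obstacle.
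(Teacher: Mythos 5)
Your proposal is correct and follows essentially the same route as the paper's proof: unroll the linear recursion \eqref{matrixform} from $Z_0=0$, use the Kronecker structure of $\tilde{W}(t)=W(t)\otimes \mathbf{I}_{m\times m}$ to extract $z_{i,t}=t\,\Phi_{i,t}(\theta)$, and then repeat the Lagrangian/KL-projection computation of Proposition \ref{centralized} (with $\mu_{i,0}$ in place of $\mu_0$) to obtain \eqref{javab}. Your explicit attention to the empty-product convention and the index alignment $t-\rho$ is a minor bookkeeping point the paper leaves implicit, but it introduces no deviation in substance.
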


\begin{proof}
The discrete-time linear system \eqref{matrixform} has the closed-form solution
\begin{align*}
&Z_t=\bigg(\prod_{\rho=1}^t \tilde{W}(t-\rho)\bigg)Z_0+\sum_{\tau=0}^{t-1}\bigg(\prod_{\rho=1}^{t-\tau-1}\tilde{W}(t-\rho)\bigg)G_{\tau}.
\end{align*}
Letting $Z_0=0$, since $\tilde{W}(t)=W(t)\otimes \mathbf{I}_{m\times m}$, by basic properties of Kronecker product, we can extract $z_{i,t}$ from $Z_t$ for each $i$ to get
\begin{align*}
z_{i,t}=\sum_{\tau=0}^{t-1}\sum_{k=1}^n  \bigg[\prod_{\rho=1}^{t-1-\tau}W(t-\rho)\bigg]_{ik} \log \ell_k (s_k^\tau|\theta)=t\Phi_{i,t}(\theta).
\end{align*}
We now need to solve \eqref{distributed} to complete the proof. The argument follows in the same fashion as Proposition \ref{centralized}. Forming the Lagrangian as in \eqref{lagranjshode}, writing the first order conditions using $z_{i,t}$ derived above, and following the same steps as in \eqref{eq9}, \eqref{eq10} and \eqref{eq11}, the closed-form solution \eqref{javab} follows immediately.
\end{proof}

Equation \eqref{javab} shows that at each time period $t\geq 0$, the set of distributed update rules \eqref{z1}-\eqref{z2}-\eqref{distributed} construct a {\it Gibbs distribution} over the states. As we shall see in the next subsection, $\Phi_{i,t}(\theta)$ plays a key role on the convergence of the sequence of distributions generated over time.

\subsection{Convergence Analysis}
We now exhibit that aggregating information over time, agents have arbitrarily close opinions in a connected network. This is captured by the fact that the limit of \eqref{alireza} is independent of $i$ which indexes agents. In this direction, we study the limit behavior of \eqref{alireza} in the following lemma.

\begin{lemma}\label{phi}
Under assumptions (A2) and (A3), the vector $\Phi_{i,t}(\theta)$ defined in \eqref{alireza} converges in the probability sense as follows
\begin{align*}
 \Phi_{i,t}(\theta)\overset {p}{\longrightarrow}  \Phi_{\infty}(\theta)=\frac{1}{n}\sum_{k=1}^n\mathbb{E}^\ast_{s_k}[\log \ell_k(s_k|\theta)].
\end{align*}
\end{lemma}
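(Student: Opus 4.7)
The plan is to write $\Phi_{i,t}(\theta)$ as a deterministic-mean piece with random weights plus a mean-zero noise piece, and treat the two separately: assumption (A3) forces the time-averaged gossip weights to $1/n$, while (A2) plus the independence of signals drives the noise piece to zero in $L^{2}$.

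Concretely, set $c_k(\theta):=\mathbb{E}^{\ast}_{s_k}[\log\ell_k(s_k|\theta)]$, $X_{k,\tau}(\theta):=\log\ell_k(s_k^\tau|\theta)-c_k(\theta)$, and $w^{(i,k)}_{t,\tau}:=\bigl[\prod_{\rho=1}^{t-1-\tau}W(t-\rho)\bigr]_{ik}$. Each $W(s)$ in \eqref{matrixW} is symmetric and in fact a projection, $W(s)^{2}=W(s)$, so every finite product is doubly stochastic; in particular $\sum_k w^{(i,k)}_{t,\tau}=1$. Then decompose $\Phi_{i,t}(\theta)=A_{i,t}(\theta)+B_{i,t}(\theta)$ with
\begin{align*}
A_{i,t}(\theta)&=\frac{1}{t}\sum_{\tau=0}^{t-1}\sum_{k=1}^{n}w^{(i,k)}_{t,\tau}\,c_k(\theta),\\
B_{i,t}(\theta)&=\frac{1}{t}\sum_{\tau=0}^{t-1}\sum_{k=1}^{n}w^{(i,k)}_{t,\tau}\,X_{k,\tau}(\theta),
\end{align*}
and prove $A_{i,t}(\theta)\to\Phi_\infty(\theta)$ and $B_{i,t}(\theta)\to 0$ in probability.

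For $A_{i,t}$ I would establish geometric $L^{2}$ mixing of the gossip products as follows. Using symmetry and $W(s)^{\top}W(s)=W(s)$, conditioning, and iterating over the independent factors, for any $y\in\mathbf{1}^{\perp}$ one has $\mathbb{E}\|W(t)\cdots W(1)y\|^{2}\le\beta^{\,t}\|y\|^{2}$, where $\beta<1$ is the second-largest eigenvalue magnitude of $\mathbb{E}[W(1)]$ granted by (A3); here $\mathbf{1}^{\perp}$ is invariant under each $W(s)$ since $\mathbf{1}^{\top}W(s)=\mathbf{1}^{\top}$. Applying this to $y=e_k-(1/n)\mathbf{1}\in\mathbf{1}^{\perp}$ and bounding one coordinate by the norm gives $\mathbb{E}[(w^{(i,k)}_{t,\tau}-1/n)^{2}]\le\beta^{\,t-1-\tau}$, uniformly in $i,k$. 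Then by Cauchy--Schwarz,
\[
\mathbb{E}\bigl[(A_{i,t}(\theta)-\Phi_\infty(\theta))^{2}\bigr]\le\frac{\|c(\theta)\|_{2}^{2}}{t^{2}}\sum_{\tau=0}^{t-1}\sum_{k=1}^{n}\mathbb{E}[(w^{(i,k)}_{t,\tau}-1/n)^{2}]\le\frac{n\,\|c(\theta)\|_{2}^{2}}{t^{2}(1-\beta)},
\]
so $A_{i,t}(\theta)\to\Phi_\infty(\theta)$ in $L^{2}$, hence in probability. For $B_{i,t}$, the signal process $\{s^\tau\}$ is independent of the gossip sequence $\{W(s)\}$, the $X_{k,\tau}$ are mean zero and jointly independent across $(k,\tau)$ (i.i.d. in time, independent across agents), and $\sup_{k,\tau}\mathrm{Var}(X_{k,\tau})\le\sigma^{2}<\infty$ by (A2). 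Conditioning on the $\sigma$-algebra $\mathcal{G}$ generated by the gossip matrices,
\[
\mathbb{E}[B_{i,t}(\theta)^{2}\mid\mathcal{G}]=\frac{1}{t^{2}}\sum_{\tau=0}^{t-1}\sum_{k=1}^{n}(w^{(i,k)}_{t,\tau})^{2}\,\mathrm{Var}(X_{k,\tau})\le\frac{\sigma^{2}}{t^{2}}\sum_{\tau=0}^{t-1}\sum_{k=1}^{n}w^{(i,k)}_{t,\tau}=\frac{\sigma^{2}}{t},
\]
using $w^{(i,k)}_{t,\tau}\in[0,1]$ and $\sum_k w^{(i,k)}_{t,\tau}=1$; taking unconditional expectation and applying Markov's inequality gives $B_{i,t}(\theta)\to 0$ in probability. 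Combining via the triangle inequality concludes the proof.

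The main obstacle is the geometric $L^{2}$ decay of $w^{(i,k)}_{t,\tau}-1/n$ from (A3), which only controls the first moment of $W(s)$; the crucial observation that unlocks it is the projection identity $W(s)^{2}=W(s)$, which reduces $\mathbb{E}[W(s)^{\top}W(s)]$ to $\mathbb{E}[W(s)]$ and lets the spectral contraction on $\mathbf{1}^{\perp}$ propagate cleanly through independent factors without needing any further a priori bound on second moments of the gossip matrices.
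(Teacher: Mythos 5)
Your proposal is correct in strategy and reaches the right conclusion, and it uses the same underlying decomposition as the paper: splitting $\Phi_{i,t}$ into the signal-fluctuation part (your $B_{i,t}$, which the paper controls through the conditional variance bound $\frac{1}{t^2}\sum_{\tau,k}[\prod W]^2_{ik}\,\mathrm{var}[\log\ell_k]\le \frac{1}{t}\sum_k \mathrm{var}[\log\ell_k]$, exactly your doubly-stochastic argument) and the conditional-mean part (your $A_{i,t}$). Where you genuinely diverge is in handling $A_{i,t}$: the paper invokes an off-the-shelf almost-sure convergence theorem for products of i.i.d.\ doubly stochastic matrices, $W(t)\cdots W(1)\to\frac{1}{n}\mathbf{1}\mathbf{1}^T$ a.s.\ (citing Tahbaz-Salehi and Jadbabaie), and then passes through a Ces\`aro-mean argument, whereas you prove a self-contained, quantitative geometric $L^2$ mixing estimate $\mathbb{E}[(w^{(i,k)}_{t,\tau}-1/n)^2]\le\beta^{t-1-\tau}$ from the projection identity $W(s)^2=W(s)$ (which indeed holds, since $W(s)=I-uu^T$ with $\|u\|=1$, and reduces $\mathbb{E}[W^\top W]$ to $\mathbb{E}[W]$). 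Your route is arguably preferable here: it avoids the external citation, sidesteps the slightly delicate point that the summands in the Ces\`aro average depend on both $\tau$ and $t$, and yields explicit rates that the paper later needs anyway (Theorem 2 asserts $\mathrm{var}[\Phi_{i,t}(\theta_j)]$ decays like $C/t$, which your estimates substantiate).

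One step in your argument is not justified as written: the displayed inequality
\begin{align*}
\mathbb{E}\bigl[(A_{i,t}(\theta)-\Phi_\infty(\theta))^{2}\bigr]\le\frac{\|c(\theta)\|_{2}^{2}}{t^{2}}\sum_{\tau=0}^{t-1}\sum_{k=1}^{n}\mathbb{E}\bigl[(w^{(i,k)}_{t,\tau}-1/n)^{2}\bigr]
\end{align*}
silently discards the cross-covariance terms between $d_\tau:=\sum_k (w^{(i,k)}_{t,\tau}-1/n)c_k(\theta)$ at different $\tau$. These are not zero and have no sign control: the weight vectors for $\tau<\tau'$ share the common gossip factors $W(t-1),\dots,W(\tau'+1)$ as nested products, so they are strongly (typically positively) correlated. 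The fix is immediate and costs nothing: apply Minkowski's inequality ($L^2$ triangle inequality) to $A_{i,t}-\Phi_\infty=\frac{1}{t}\sum_\tau d_\tau$, which together with Cauchy--Schwarz in $k$ gives
\begin{align*}
\bigl\|A_{i,t}(\theta)-\Phi_\infty(\theta)\bigr\|_{L^2}\le\frac{\|c(\theta)\|_2}{t}\sum_{\tau=0}^{t-1}\sqrt{n}\,\beta^{(t-1-\tau)/2}\le\frac{\sqrt{n}\,\|c(\theta)\|_2}{t\,(1-\sqrt{\beta})},
\end{align*}
recovering the $O(1/t^{2})$ variance bound you claimed (with $\sqrt{\beta}$ in place of $\beta$) and leaving the rest of your proof intact. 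Alternatively, the cruder bound $\bigl(\sum_\tau d_\tau\bigr)^2\le t\sum_\tau d_\tau^2$ gives $O(1/t)$, which still suffices for convergence in probability.
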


\begin{proof}
The sequence $\{W(t)\}_{t=0}^\infty$ is doubly stochastic, and the product term in $\Phi_{i,t}(\theta)$ preserves doubly stochasticity, so for any $j\in \{1,2,...,m\}$ we have

{\small
\begin{align*}  
\text{var}[\Phi_{i,t}(\theta_j)] &= \frac{1}{t^2}\sum_{\tau=0}^{t-1}\sum_{k=1}^n\bigg[\prod_{\rho=1}^{t-1-\tau}W(t-\rho)\bigg]^2_{ik} \text{var}[\log \ell_k (s_k|\theta_j)]\\
&\leq \frac{1}{t} \sum_{k=1}^n \text{var}[\log \ell_k (s_k|\theta_j)] .
\end{align*}}
Hence, bounded variance assumption (A2) guarantees $\Phi_{i,t}(\theta)-\mathbb{E}[\Phi_{i,t}(\theta)]\overset {p}{\longrightarrow}0$. It can be shown\cite{boyd2006randomized} that 
\begin{align*}
\mathbb{E}[W(t)]=\mathbb{E}[W(0)]=I-\frac{1}{2n}D+\frac{P+P^T}{2n},
\end{align*}
where the diagonal matrix $D$ is of the form $D_i=\sum_{j=1}^n[P_{ij}+P_{ji}]$. The fact that the sequence $\{W(t)\}_{t=0}^\infty$ is i.i.d. and doubly stochastic, and the second largest eigenvalue of $\mathbb{E}[W(t)]$ is less than one in magnitude (A3), entails\cite{tahbaz2010consensus}
\begin{align*}
W(t)W(t-1)...W(1)\longrightarrow \frac{1}{n}\mathbf{1}\mathbf{1}^T,
\end{align*}
almost surely, which results in
{\small
\begin{align*}
\mathbb{E}[\Phi_{i,t}(\theta)]&=\sum_{k=1}^n\frac{1}{t}\sum_{\tau=0}^{t-1}\bigg[\prod_{\rho=1}^{t-1-\tau}W(t-\rho)\bigg]_{ik} \mathbb{E}^\ast_{s_k}[\log \ell_k (s_k|\theta)]\\
&\longrightarrow \frac{1}{n} \sum_{k=1}^n\mathbb{E}^\ast_{s_k}[\log \ell_k (s_k|\theta)],
\end{align*}}
where we used the fact that Ces$\grave{\text{a}}$ro mean preserves the limit.
\end{proof}

There is an interesting connection between the previous lemma and the distributed MAP algorithm proposed in~\cite{rad2010distributed} where authors establish that the point maximizer of $\Phi_{i,t}(\theta)$ over $\Theta$ converges in probability to $\theta^\ast$ in a strongly connected network. However, we still need to demonstrate that the estimator $\mu_{i,t}(\theta)$ is weakly consistent for any agent $i$. To this end, we prove that applying a Bayes-like update in the same time-scale of receiving $z_{i,t}$, the belief vector $\mu_{i,t}(\theta)$ converges in probability to the unique maximizer of \eqref{masale3} which is a Dirac distribution over $\theta^\ast$.  

\begin{theorem}\label{theorem} Given conditions in the Lemmas \ref{mudistr} and \ref{phi}, agent $i$'s estimator is weakly consistent, that is,
\begin{align*}
\mu_{i,t}(\theta) \overset {p}{\longrightarrow} \mu^\ast=\mathbf{e}_1 \ \ \text{as}  \ \ t\rightarrow \infty.\\
\end{align*}
\end{theorem}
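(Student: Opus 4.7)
The plan is to exploit the closed-form Gibbs expression for $\mu_{i,t}(\theta)$ given in Lemma~\ref{mudistr} and reduce the convergence $\mu_{i,t}\to \mathbf{e}_1$ to controlling the exponential weights driven by $t\Phi_{i,t}(\theta)$. Concretely, I would divide numerator and denominator of \eqref{javab} by $\mu_{i,0}(\theta_1)\exp[t\Phi_{i,t}(\theta_1)]$ and track, for each $j\ge 2$, the ratio
\begin{align*}
R_{j,t} \triangleq \frac{\mu_{i,0}(\theta_j)}{\mu_{i,0}(\theta_1)}\exp\!\bigl[t\bigl(\Phi_{i,t}(\theta_j)-\Phi_{i,t}(\theta_1)\bigr)\bigr].
\end{align*}
If every $R_{j,t}$ tends to $0$ in probability, then $\mu_{i,t}(\theta_1)=1/\bigl(1+\sum_{j\ge 2}R_{j,t}\bigr)$ converges in probability to $1$ and hence $\mu_{i,t}(\theta_j)\to 0$ in probability for $j\ge 2$, which is exactly the claim since $\mu^\ast=\mathbf{e}_1$.

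The key substantive step is to exploit the strict negativity of the limiting exponent. By Lemma~\ref{phi},
\begin{align*}
\Phi_{i,t}(\theta_j)-\Phi_{i,t}(\theta_1) \overset{p}{\longrightarrow} \frac{1}{n}\sum_{k=1}^n \mathbb{E}^\ast_{s_k}\!\left[\log\frac{\ell_k(s_k|\theta_j)}{\ell_k(s_k|\theta_1)}\right],
\end{align*}
which by the ordering assumption \eqref{orderassum} (a consequence of global identifiability) is strictly negative for every $j\ge 2$; denote this limit by $-D_j$ with $D_j>0$. Fix $\epsilon_j\in(0,D_j/4)$ and define the good event
\begin{align*}
E_t = \bigl\{|\Phi_{i,t}(\theta_j)-\Phi_\infty(\theta_j)|<\epsilon_j \text{ for all } j=1,\dots,m\bigr\}.
\end{align*}
Lemma~\ref{phi} gives $\mathbb{P}(E_t)\to 1$. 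On $E_t$ one has $\Phi_{i,t}(\theta_j)-\Phi_{i,t}(\theta_1)\le -D_j+2\epsilon_j\le -D_j/2$, so
\begin{align*}
R_{j,t}\le \frac{\mu_{i,0}(\theta_j)}{\mu_{i,0}(\theta_1)}\exp(-tD_j/2)\longrightarrow 0,
\end{align*}
and because the priors lie in the interior of the simplex the prior ratio is a finite constant, yielding $R_{j,t}\overset{p}{\to}0$ for every $j\ge 2$ and thus the conclusion via the observation in the first paragraph.

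The only delicate point, and what I expect to be the main obstacle, is converting the probabilistic $\Phi_{i,t}\to\Phi_\infty$ convergence of Lemma~\ref{phi} into control of the \emph{exponentiated, time-scaled} quantity $\exp[t(\Phi_{i,t}(\theta_j)-\Phi_{i,t}(\theta_1))]$: a priori, the factor of $t$ in the exponent could amplify small stochastic fluctuations. The good-event argument above resolves this because the limit is strictly negative with a uniform positive gap $D_j$, so forcing $\Phi_{i,t}$ within an $\epsilon_j$-neighborhood of its limit (an event of probability tending to $1$) already dominates any exponentiated error. This framework also makes transparent the exponential-rate claim advertised in the abstract, with rate governed by $\min_{j\ge 2}D_j/2$, i.e., half the average expected discrimination information between $\theta^\ast$ and the second-likeliest state.
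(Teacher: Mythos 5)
Your proposal is correct and follows essentially the same route as the paper's own proof: the same inversion of the Gibbs form \eqref{javab} into $\mu_{i,t}(\theta_1)=\bigl(1+\sum_{j\ge 2}R_{j,t}\bigr)^{-1}$, and the same appeal to Lemma~\ref{phi} together with the ordering \eqref{orderassum} to make each exponent strictly negative in the limit. Your explicit good-event treatment of the time-scaled exponent $\exp[t(\Phi_{i,t}(\theta_j)-\Phi_{i,t}(\theta_1))]$ carefully fills in a step the paper asserts only tersely (and later quantifies via Chebyshev in the proof of Theorem~\ref{rate}); the only nit is bookkeeping --- on $E_t$ the bound is $-D_j+\epsilon_j+\epsilon_1$ rather than $-D_j+2\epsilon_j$, fixed by taking a common $\epsilon<D_2/4$ for all $j$.
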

\begin{proof}
We have the explicit form of the estimator $\mu_{i,t}$ according to Lemma \ref{mudistr}. Therefore,
\begin{align*}
\mu_{i,t}&(\theta^\ast=\theta_1)=\frac{\mu_{i,0}(\theta_1)\exp[t \Phi_{i,t}(\theta_1)]}{\sum_{j=1}^m \mu_{i,0}(\theta_j)\exp [t\Phi_{i,t}(\theta_j)]} \\
&=\bigg(1+\sum_{j\geq 2}\frac{ \mu_{i,0}(\theta_j)}{\mu_{i,0}(\theta_1)}\exp [t\Phi_{i,t}(\theta_j)-t\Phi_{i,t}(\theta_1)]\bigg)^{-1}.
\end{align*}
Under purview of Lemma \ref{phi} and equation \eqref{orderassum}, $[\Phi_{i,t}(\theta_j)-\Phi_{i,t}(\theta_1)]$ converges to a negative number for any $j \geq 2$, and hence $\mu_{i,t}(\theta_1)\overset{p}{\longrightarrow}1$. The fact that $\mu_{i,t}(\theta) \in \Delta \Theta$ implies that $\mu_{i,t}(\theta_j)\overset{p}{\longrightarrow}0$ for all $j\geq 2$, so $\mu_{i,t}(\theta)\overset{p}{\longrightarrow}\mu^\ast=\mathbf{e}_1$.
\end{proof}

Theorem \ref{theorem} too underscores the trade-off between the adaptation and learning in the network. In many distributed optimization settings the stepsize sequence must vanish to allow nodes to reach consensus. However, the result of Theorem \ref{theorem} holds for unit stepsize sequence which guarantees learning even under continuous information injection to the network. This stems from the fact that the algorithm allows $z_{i,t}$ to grow unboundedly in each direction, while it lets the true state to be the influential component by having the largest exponential rate in the generated Gibbs distribution.
\subsection{Learning Rate Analysis}
In this section we characterize the convergence rate of the estimator $\mu_{i,t}(\theta)$. More specifically, we prove that convergence occurs exponentially fast with a rate dependent on the {\it average expected discrimination information} for $\theta_1=\theta^\ast$ over $\theta_2$, where $\theta_2$ is the state with the second largest expected log-likelihood \eqref{orderassum}.
\begin{definition}
The expected discrimination information of agent $i$ for $\theta_1=\theta^\ast$ over any $\theta_j$ is 
\begin{align*}
D_{KL}\bigg(\ell_i(.|\theta_1)||\ell_i(.|\theta_j)\bigg)=\mathbb{E}^\ast_{s_i}\bigg[\log \frac {\ell_i (s_i|\theta_1)}{\ell_i (s_i|\theta_j)}\bigg].
\end{align*}
Denoting by $D(\theta_j)$, the {\it average expected discrimination information} for $\theta_1=\theta^\ast$ over $\theta_j$ is defined as
\begin{align}
D(\theta_j)& \triangleq \frac{1}{n}\sum_{i=1}^nD_{KL}\bigg(\ell_i(.|\theta_1)||\ell_i(.|\theta_j)\bigg). \label{}
\end{align} 
\end{definition}
As an immediate consequence of the definition above, one can see from Lemma \ref{phi} that $D(\theta_j)=\Phi_{\infty}(\theta_1)-\Phi_{\infty}(\theta_j)$ for any $j \geq 1$, and $D(\theta_1)=0$.
\begin{theorem}\label{rate}
Given conditions in the Lemmas \ref{mudistr} and \ref{phi}, for any $\epsilon>0$ and $t$ large enough, the estimator $\mu_{i,t}(\theta_1)$ can be bounded as
\begin{align}
\big|\mu_{i,t}(\theta_1)-1\big| \leq \mathcal{K} \exp[(-D(\theta_2)+\epsilon)t], \label{rate11}
\end{align}
with probability at least $1-\delta(\epsilon,t)$, where $\mathcal{K}$ is a constant. 
\end{theorem}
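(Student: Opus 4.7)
My plan starts from the explicit Gibbs form of the estimator supplied by Lemma \ref{mudistr}. Using that expression and factoring out the $\theta_1$ exponential in the denominator exactly as in the proof of Theorem \ref{theorem}, I can write
\begin{align*}
\bigl|1-\mu_{i,t}(\theta_1)\bigr|
=\frac{\displaystyle\sum_{j\geq 2}\frac{\mu_{i,0}(\theta_j)}{\mu_{i,0}(\theta_1)}\exp\bigl[t\bigl(\Phi_{i,t}(\theta_j)-\Phi_{i,t}(\theta_1)\bigr)\bigr]}
{1+\displaystyle\sum_{j\geq 2}\frac{\mu_{i,0}(\theta_j)}{\mu_{i,0}(\theta_1)}\exp\bigl[t\bigl(\Phi_{i,t}(\theta_j)-\Phi_{i,t}(\theta_1)\bigr)\bigr]},
\end{align*}
and bound this ratio by its numerator. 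Hence it suffices to show that, with high probability, every exponent $t(\Phi_{i,t}(\theta_j)-\Phi_{i,t}(\theta_1))$ for $j\geq 2$ is dominated by $t(-D(\theta_2)+\epsilon)$, after which the prefactor $\mathcal{K}=(1-\mu_{i,0}(\theta_1))/\mu_{i,0}(\theta_1)$ (finite thanks to the interior-prior assumption on $\mu_{i,0}$) absorbs the remaining constants and yields \eqref{rate11}.

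The bulk of the work is therefore a quantitative upgrade of Lemma \ref{phi}. I would control $\Phi_{i,t}(\theta_j)-\Phi_\infty(\theta_j)$ by splitting it into a stochastic part, $\Phi_{i,t}(\theta_j)-\mathbb{E}[\Phi_{i,t}(\theta_j)]$, and a deterministic part, $\mathbb{E}[\Phi_{i,t}(\theta_j)]-\Phi_\infty(\theta_j)$. For the stochastic part, the variance bound already established inside the proof of Lemma \ref{phi} together with Chebyshev's inequality gives
\begin{align*}
\mathbb{P}\Bigl(\bigl|\Phi_{i,t}(\theta_j)-\mathbb{E}[\Phi_{i,t}(\theta_j)]\bigr|>\tfrac{\epsilon}{2}\Bigr)\leq \frac{4\sum_{k=1}^n\mathrm{var}[\log\ell_k(s_k|\theta_j)]}{t\,\epsilon^{2}}.
\end{align*}
For the deterministic part, I would invoke the $L^2$ consensus rate of the i.i.d. gossip matrices (assumption A3), which gives $\|\mathbb{E}[\prod_{\rho=1}^{r}W(t-\rho)]-\tfrac{1}{n}\mathbf{1}\mathbf{1}^T\|$ decaying geometrically in $r$; combined with Ces\`aro averaging over $\tau$ and the boundedness of $\mathbb{E}^\ast_{s_k}[\log\ell_k(s_k|\theta_j)]$ implied by (A2), this yields $|\mathbb{E}[\Phi_{i,t}(\theta_j)]-\Phi_\infty(\theta_j)|\leq \epsilon/2$ for all $t$ larger than some explicit threshold depending on $\epsilon$ and the spectral gap of $\mathbb{E}[W(t)]$.

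Putting the two halves together and taking a union bound over the $m-1$ states $\theta_j$ with $j\geq 2$, I obtain an event of probability at least $1-\delta(\epsilon,t)$, with $\delta(\epsilon,t)=O(1/(t\,\epsilon^{2}))$, on which $\Phi_{i,t}(\theta_j)-\Phi_{i,t}(\theta_1)\leq -D(\theta_j)+\epsilon\leq -D(\theta_2)+\epsilon$ simultaneously for all $j\geq 2$, since \eqref{orderassum} implies $D(\theta_j)\geq D(\theta_2)>0$ for every $j\geq 2$. On this event the numerator bound above becomes $\mathcal{K}\exp[(-D(\theta_2)+\epsilon)t]$ and \eqref{rate11} follows.

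The main obstacle I anticipate is the deterministic part: obtaining a clean quantitative decay of $\mathbb{E}[\Phi_{i,t}(\theta)]-\Phi_\infty(\theta)$ requires passing through a Ces\`aro average of geometrically decaying consensus errors, which, if done too loosely, degrades the final rate. The stochastic part is routine Chebyshev, but the interplay with the spectral gap of $\mathbb{E}[W(t)]$ must be handled carefully to keep $\delta(\epsilon,t)$ vanishing and to ensure the ``$t$ large enough'' clause in the statement is an explicit function of $\epsilon$ and the network parameters.
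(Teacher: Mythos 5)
Your proposal is correct and follows the same skeleton as the paper's proof: bound $\bigl|1-\mu_{i,t}(\theta_1)\bigr|$ by the numerator of the Gibbs ratio (the paper does this via $1-\lambda\leq(1+\lambda)^{-1}$, which is the identical estimate), write each exponent as $-D(\theta_j)t$ plus fluctuation terms $b_j-b_1$ with $b_j=\Phi_{i,t}(\theta_j)-\Phi_\infty(\theta_j)$, control these by Chebyshev using the $C/t$ variance decay from the proof of Lemma \ref{phi}, and invoke the ordering \eqref{orderassum} to replace $D(\theta_j)$ by $D(\theta_2)$. Where you genuinely differ is the concentration step, and there your version is more careful than the paper's: the paper applies Chebyshev directly to $b_j$, which strictly speaking only controls the deviation from $\mathbb{E}[\Phi_{i,t}(\theta_j)]$, not from $\Phi_\infty(\theta_j)$; your split into a stochastic part and a deterministic bias, with the bias bounded via the geometric decay of $(\mathbb{E}[W])^{r}-\frac{1}{n}\mathbf{1}\mathbf{1}^T$ (valid since the $W(t)$ are i.i.d., so the expected product is a power of $\mathbb{E}[W]$, a symmetric doubly stochastic matrix with spectral gap by (A3)) followed by Ces\`aro averaging, yields an $O\bigl(1/(t(1-\lambda_2))\bigr)$ bias and makes the theorem's ``$t$ large enough'' clause explicit — this buys you a fully quantitative threshold where the paper leaves the bias implicit. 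Two small bookkeeping points: your union bound must also include $\theta_1$, since the exponent is $-D(\theta_j)+b_j-b_1$ and you need $|b_1|$ small on the same event (this only changes the constant in $\delta(\epsilon,t)$, matching the paper's $4C/(\epsilon^2 t)$); and your attribution of the finiteness of $\mathbb{E}^\ast_{s_k}[\log\ell_k(s_k|\theta_j)]$ to (A2) is slightly off — it is an implicit standing assumption of the formulation \eqref{masale3} rather than a consequence of the variance bound — but neither affects the validity of the argument. Your constant $\mathcal{K}=(1-\mu_{i,0}(\theta_1))/\mu_{i,0}(\theta_1)$ is a legitimate, in fact slightly tighter, alternative to the paper's $(m-1)\max_k\mu_{i,0}(\theta_k)/\mu_{i,0}(\theta_1)$.
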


\begin{proof}
Following the lines in the proof of Theorem \ref{theorem}, we have
\begin{align*}
\mu_{i,t}(\theta_1)&=\bigg(1+\sum_{j\geq 2}\frac{ \mu_{i,0}(\theta_j)}{\mu_{i,0}(\theta_1)}\exp [t\Phi_{i,t}(\theta_j)-t\Phi_{i,t}(\theta_1)]\bigg)^{-1}\\
&\geq 1-\sum_{j\geq 2}\frac{ \mu_{i,0}(\theta_j)}{\mu_{i,0}(\theta_1)}\exp [t\Phi_{i,t}(\theta_j)-t\Phi_{i,t}(\theta_1)],
\end{align*}
where in the last step we used the inequality
\begin{align*}
1-\lambda \leq (1+\lambda)^{-1} \ \ \ \forall \lambda \geq 0.
\end{align*}
Letting $b_j \triangleq \Phi_{i,t}(\theta_j)-\Phi_{\infty}(\theta_j)$, we derive 
\begin{align*}
\big|\mu_{i,t}(\theta_1)&-1\big|\leq \sum_{j\geq 2}\frac{ \mu_{i,0}(\theta_j)}{\mu_{i,0}(\theta_1)}\exp [t\Phi_{i,t}(\theta_j)-t\Phi_{i,t}(\theta_1)]\\
&\leq \max_k \frac{ \mu_{i,0}(\theta_k)}{\mu_{i,0}(\theta_1)} \sum_{j\geq 2}\exp [t\Phi_{i,t}(\theta_j)-t\Phi_{i,t}(\theta_1)]\\
&= \max_k \frac{ \mu_{i,0}(\theta_k)}{\mu_{i,0}(\theta_1)} \sum_{j\geq 2}\exp [(-D(\theta_j)+b_j-b_1)t].
\end{align*}

One can see in the proof of Lemma \ref{phi} that $\text{var}[\Phi_{i,t}(\theta_j)]$ decays with a rate $C/t$ for some constant $C>0$; hence, for any $\epsilon>0$ and $j \geq 1$, by Chebyshev's inequality we obtain
\begin{align*}
\mathbb{P}(|b_j|\geq \epsilon)\leq \frac{C}{\epsilon^2t}.
\end{align*}
Combining with $D(\theta_m)\geq...\geq D(\theta_2)>D(\theta_1)=0$ by \eqref{orderassum}, for any $\epsilon>0$ and $t$ large enough, we have
\begin{align*}
\big|\mu_{i,t}(\theta_1)-1\big|& \leq \max_k \frac{ \mu_{i,0}(\theta_k)}{\mu_{i,0}(\theta_1)} \sum_{j\geq 2}\exp [(-D(\theta_2)+2\epsilon)t]\\
&=(m-1)\max_k \frac{ \mu_{i,0}(\theta_k)}{\mu_{i,0}(\theta_1)}\exp [(-D(\theta_2)+2\epsilon)t],
\end{align*}
with probability at least $1-\frac{C}{\epsilon^2t}$. Hence, the constants in \eqref{rate11} are determined as
\begin{align*}
\mathcal{K}=(m-1)\max_k \frac{ \mu_{i,0}(\theta_k)}{\mu_{i,0}(\theta_1)} \ \ \text{and} \ \ \delta(\epsilon,t)=\frac{4C}{\epsilon^2t},
\end{align*}
and we are done.
\end{proof}

Theorem \ref{rate} suggests that the proposed distributed stochastic learning method in \eqref{z1}-\eqref{z2}-\eqref{distributed} converges exponentially fast with high probability. Moreover, agents learn the true state with a rate dependent on the KL divergence of observations under the true state from observations under the second likeliest state. This, indeed, stresses the efficiency of the algorithm.

\section{Conclusion}
We studied a distributed parameter estimation problem over networks 
when agents receive a sequence of i.i.d. signals but the signals are not
informative enough to identify the true parameter. Using a randomized, gossip dual averaging, agents aggregate local log-likelihood functions, 
and then perform a Bayes-like update on the averaged information to collectively recover the truth. Assuming connectivity of the network and global identifiability of the true
state, we showed that agents beliefs reach consensus and collapse to a degenerate distribution over the true parameter, and with high probability the convergence is exponentially fast. We also proved that the rate of exponential depends on the KL divergence of observations under true state from observations under second likeliest state.
As a salient feature of the algorithm, we showed that contrary to other stochastic
gradient descent methods, the stepsize can be chosen to be fixed and
set to 1. Future directions include addition of dynamics to the parameter
and relaxing the independence conditions on observations as well as specialization to the Gaussian case, where one only needs to update the mean and variance.



\section*{Acknowledgments}
The authors would like to thank Robin Pemantle for many helpful comments and discussions.


{\tiny

\bibliographystyle{IEEEtran}
\bibliography{IEEEabrv,shahin}}

\end{document}